\documentclass[letterpaper]{article}
\usepackage[preprint]{aaai2027}
\usepackage[hyphens]{url}
\usepackage{graphicx}
\usepackage{natbib}
\usepackage{caption}
\usepackage{amsmath}
\usepackage{amssymb}
\usepackage{booktabs}
\usepackage{amsthm}
\newtheorem{theorem}{Theorem}

\begin{document}

\title{Balanced Prompt Adaptation against Entropy-Induced Collapse\\
for Test-Time Binary Segmentation}

\author{
Zhengshan Wang,
Joshua Charles Webster-Ford,
Yifei Tian,\\
Xinxin Wang\textsuperscript{\rm 1}\corresponding,
Long Chen,
Weiping Ding
}
\affiliations{
\textsuperscript{\rm 1}Shenzhen University, Shenzhen, China\\
Corresponding author: xinxinwang1024@gmail.com\\
ORCID: \url{https://orcid.org/0009-0000-6065-7651}
}

\maketitle

\begin{abstract}
Entropy minimization is a standard objective for test-time adaptation (TTA), but it can fail in imbalanced binary segmentation. Unlike image classification, dense segmentation aggregates thousands of pixel predictions, allowing the larger predicted class to dominate the update, pull minority predictions toward itself, and produce a degenerate mask as predictions saturate and their entropy gradients vanish. We theoretically establish this collapse in a shared-shift model. This analysis motivates Balanced-Anchor Prompt Adaptation (BAPA), which combines two complementary modules. The Class-Balanced Anchors (CBA) module selects high-confidence anchors separately from each predicted class and gives foreground and background equal total loss weight, preventing the larger region from dominating the update. Dynamic Prompt Adaptation (DPA) refreshes these anchors after each prediction update and optimizes only text-side prompt residuals while keeping the vision--language encoders frozen. This prompt-only update refines the foreground--background decision boundary without altering the pretrained dense visual representation. Across experiments from four domains, BAPA achieves the highest mean Dice among the evaluated methods. Factorized ablations further validate the complementary roles of CBA and DPA, supporting balanced prompt adaptation as an effective alternative to entropy minimization for test-time binary segmentation.
\end{abstract}

%-------------------------------------------------------------------------
\section{Introduction}
%-------------------------------------------------------------------------

Test-time adaptation (TTA) seeks to recover model performance under distribution shift without labeled target data~\cite{wang2021tent,liang2023survey}. It is especially attractive for segmentation systems deployed across domains, because medical lesions, salient objects, pets, and open-vocabulary object categories often differ substantially from the data used to train the base model. Vision--language models (VLMs) further broaden this setting: a user can specify a foreground concept by text and obtain a binary mask without task-specific training~\cite{radford2021learning,noori2025mlmp}. The remaining question is how to adapt such source-free predictions safely at test time.

A common TTA strategy is to minimize prediction entropy~\cite{wang2021tent,noori2025mlmp,niu2023towards,niu2022efficient,shu2022test}, encouraging the model to make more confident predictions. This objective is attractive because it requires neither source data nor target annotations, but confidence alone cannot tell whether a prediction is correct. In image classification, entropy is applied to a small number of image-level outputs; in segmentation, the same objective aggregates thousands of pixel decisions. The resulting update is therefore shaped not only by uncertainty, but also by how many pixels are currently assigned to each class.

\begin{figure}[!t]
\centering
\includegraphics[width=\columnwidth,trim=63bp 129bp 264bp 89bp,clip]{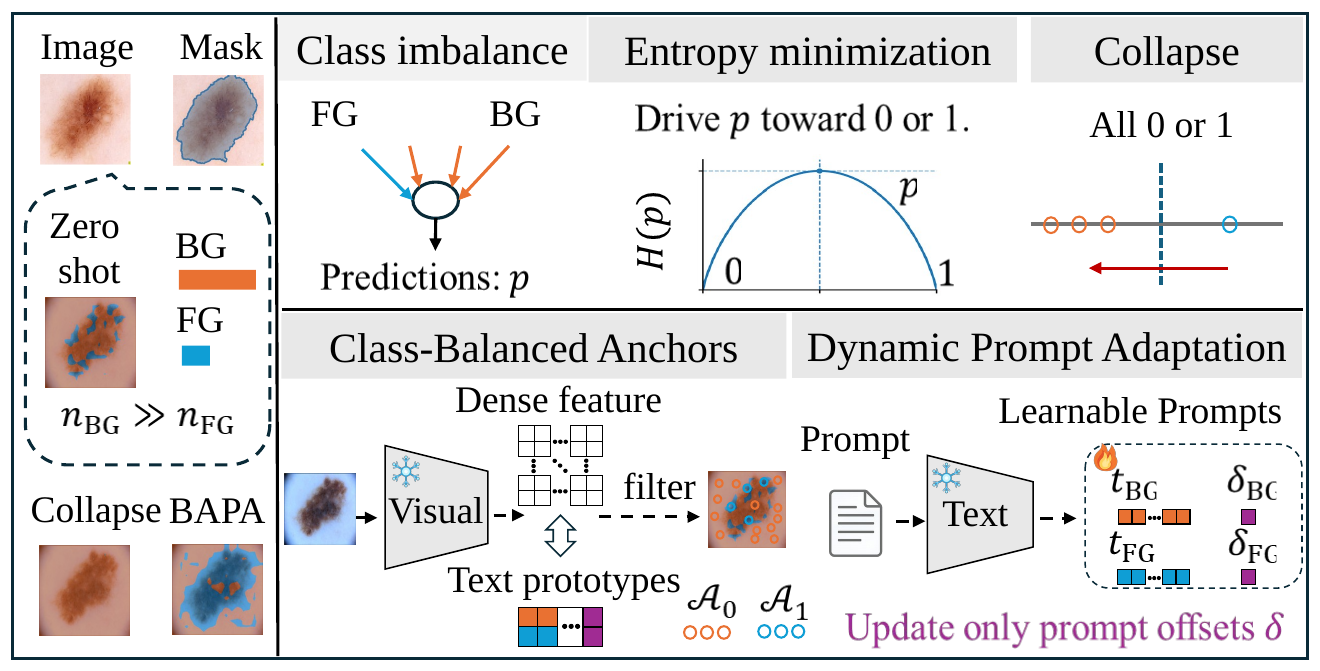}
\caption{Limitations of entropy-based TTA and balanced prompt adaptation.
In imbalanced binary segmentation, entropy-based adaptation can amplify the currently dominant predicted class and drive the mask toward a degenerate solution. BAPA addresses this failure through Class-Balanced Anchors (CBA) and Dynamic Prompt Adaptation (DPA). BG and FG denote background and foreground, respectively.}
\label{fig:landscape}
\end{figure}

This limitation becomes more severe in binary segmentation, where the foreground often occupies only a small part of the image. Entropy minimization pushes each pixel to become more confidently background (BG) or foreground (FG), whether its current prediction is correct or not; errors can therefore be sharpened rather than corrected. Because the same adaptation update is computed from all pixels, the majority prediction can also move minority pixels in the same direction, producing nearly all-background or all-foreground masks. This degeneration is referred to as \emph{entropy-induced collapse}.

This collapse is driven by two properties of the entropy objective. First, entropy minimization rewards confident background or foreground predictions, regardless of whether they are correct. Second, once an incorrect prediction becomes highly confident, the entropy gradient becomes weak, leaving little signal to undo the error. In imbalanced masks, the majority class can dominate the aggregate update and pull minority pixels in the same direction. This makes the failure different from ordinary overfitting to pseudo-labels: the objective itself can favor a degenerate but highly confident mask. Stabilization methods such as entropy filtering with SAR~\cite{niu2023towards}, Fisher regularization with EATA~\cite{niu2022efficient}, and weight averaging can alter the trajectory, but they still use entropy as the adaptation signal and do not provide class-balanced corrective targets.

The analysis leads to two complementary modules. The \emph{Class-Balanced Anchors} (CBA) module selects confident pixels separately from the current background and foreground predictions and gives the two classes equal total loss weight. This does not assume equal foreground and background areas; it only prevents the larger predicted region from exerting greater optimization influence. \emph{Dynamic Prompt Adaptation} (DPA) then reconstructs the anchors after each prediction update while optimizing only two text-side residuals and keeping the vision--language encoders frozen. The dynamic refresh allows improved intermediate masks to provide updated supervision, while the prompt-only update adjusts the foreground--background semantic boundary without altering dense visual features. Together, CBA and DPA form BAPA and turn adaptation from entropy-driven confidence sharpening into balanced correction of the current binary decision boundary.

Figure~\ref{fig:landscape} summarizes the motivation of the paper and illustrates how CBA and DPA jointly counter entropy-induced collapse.
This work makes three contributions:
\begin{enumerate}
    \item We identify entropy-induced collapse as a failure mode of entropy-based TTA in imbalanced binary segmentation and connect it to majority-class dominance in the adaptation signal.
    \item We derive Class-Balanced Anchors (CBA) from this analysis. CBA removes pixel-count dependence by giving foreground and background anchors equal total influence.
    \item We introduce Dynamic Prompt Adaptation (DPA), which refreshes the anchors while updating only text-side residuals over frozen vision--language encoders. Together, CBA and DPA form BAPA; dataset-wide experiments on 16,486 image--task instances across four domains validate both modules.
\end{enumerate}

\section{Related Work}
%-------------------------------------------------------------------------

\subsection{Entropy-based TTA}

TTA began from single-sample self-supervised updates~\cite{sun2020test}, entropy minimization~\cite{wang2021tent}, and augmentation-based marginal entropy minimization~\cite{zhang2022memo}. Later methods improve reliability through output-space adaptation~\cite{boudiaf2022parameter}, continual restoration and ensembling~\cite{wang2022cotta}, sharpness-aware filtering~\cite{niu2023towards}, Fisher regularization~\cite{niu2022efficient}, and weight averaging~\cite{osowiechi2024watt}. MLMP~\cite{noori2025mlmp} adapts LayerNorm parameters for vision--language segmentation, while TPT~\cite{shu2022test} applies entropy minimization to prompt embeddings over augmented views. Orthogonality-constrained prompt tuning further improves VLM calibration~\cite{sharifdeen2025otpt}. Recent benchmarks show that VLM-TTA gains depend strongly on protocol, update parameterization, and reliability metrics~\cite{sheng2025illusion,huang2026what}. These methods differ in parameterization and stabilization, but none directly supplies balanced foreground--background targets.

\subsection{Open-vocabulary segmentation}

CLIP~\cite{radford2021learning} and dense extensions~\cite{li2022lseg,rao2022denseclip,xu2022groupvit,liang2023open,xu2023odise,cho2024cat,noori2025mlmp} transfer language-aligned representations to pixel prediction. Their open-vocabulary capability supports source-free deployment, but zero-shot masks can be poorly calibrated under domain shift. BAPA leaves the visual representation fixed and adapts only the binary text decision boundary.

Recent work narrows the gap between classification TTA and dense prediction. Seg-TTO~\cite{desilva2025segtto} jointly optimizes segmentation-specific visual attributes and multiple textual embeddings on top of CAT-Seg or CLIP-DINOiser. MLMP~\cite{noori2025mlmp} is the closest architecture-compatible method because it supports single-image adaptation with dense NACLIP predictions; its complete official adaptation procedure is evaluated as a separate baseline. We discuss Seg-TTO as related segmentation TTA, but exclude it from the controlled comparison because it relies on a different segmentation architecture and checkpoints.

\subsection{Recent VLM and medical TTA}

Recent VLM-TTA methods also explore cache-based dynamic adapters~\cite{karmanov2024efficient}, distributional adapters~\cite{han2024dota}, transductive objectives~\cite{zanella2024transclip}, prompt-free test-time augmentation~\cite{zanella2024mta,farina2024zero}, statistical anchoring~\cite{zanella2025realistic}, contrastive objectives~\cite{lafon2025cliptta}, Bayesian class-prior adaptation~\cite{zhou2025bayesian}, noisy/open-set adaptation~\cite{cao2025noisy}, black-box prompt search~\cite{meng2025blackbox}, debiased prompt optimization~\cite{song2026doubly}, and view-filtered or robustness-oriented prompt tuning~\cite{choi2026dual,kim2026sstpt,zhu2026rita}. Medical and dense prediction TTA further use image normalization, shape priors, episodic feature adaptation, or low-rank VLM updates~\cite{karani2021testtime,bateson2022shape,valanarasu2022onthefly,noori2026histopathc}. Many of these methods assume image-level classification, augmented-view batches, persistent state, task-specific priors, or different segmentation backbones, whereas this work studies episodic single-image dense adaptation.

\begin{figure*}[t]
\centering
\includegraphics[width=\textwidth,trim=55bp 133bp 77bp 54bp,clip]{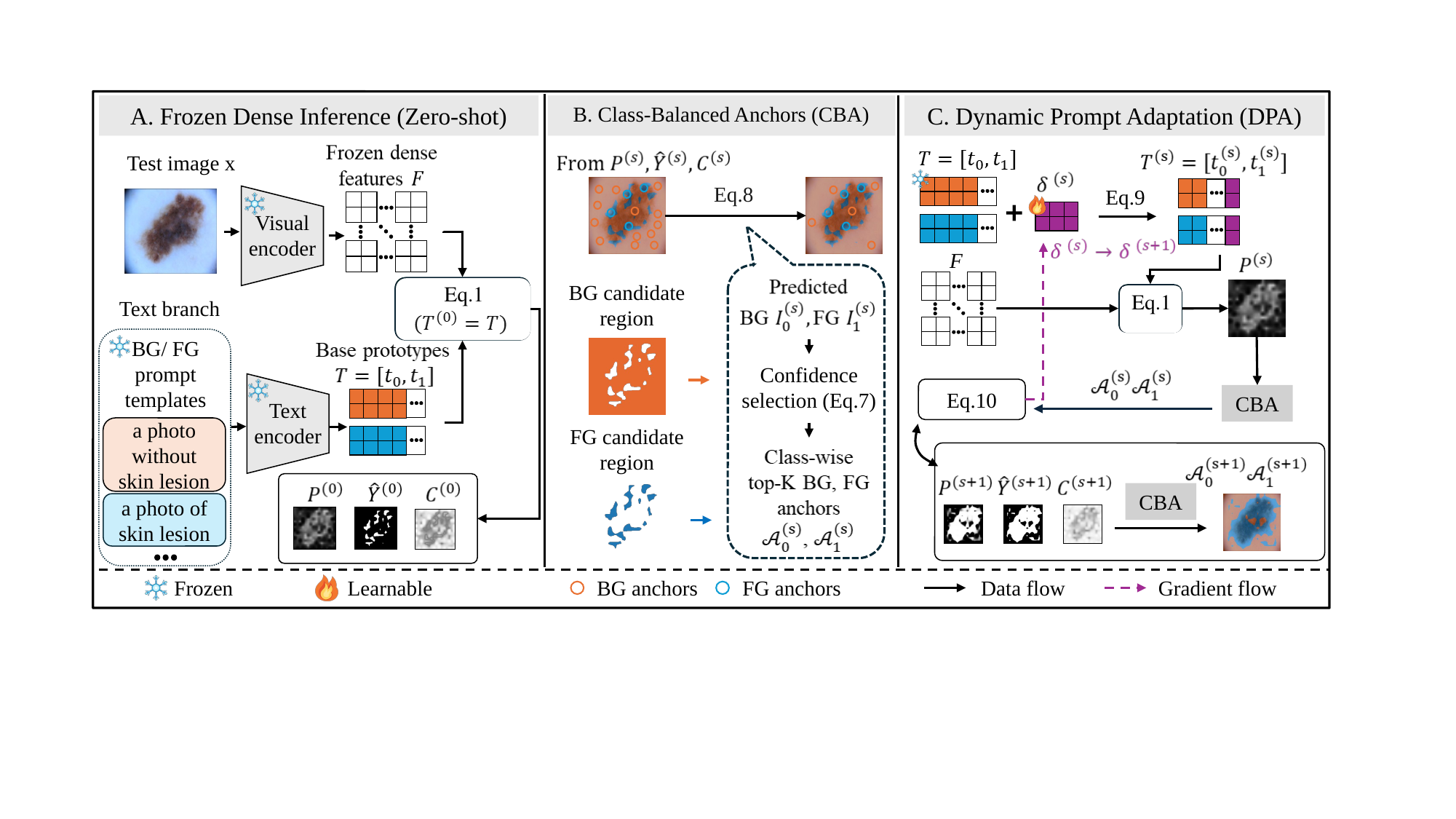}
\caption{BAPA framework. A frozen dense vision--language model (VLM) produces the zero-shot prediction. The Class-Balanced Anchors (CBA) module constructs foreground and background anchors through classwise top-$K$ confidence selection and equalizes their total loss contributions. Dynamic Prompt Adaptation (DPA) optimizes prompt residuals and refreshes the prediction, pseudo-labels, confidence, and anchors after every step, while both encoders and the dense visual representation remain fixed. BG and FG denote background and foreground, respectively.}
\label{fig:method}
\end{figure*}

\subsection{Pseudo-label adaptation}

Pseudo-labeling~\cite{lee2013pseudo}, self-training~\cite{xie2020self}, and confidence selection~\cite{sohn2020fixmatch,zou2018unsupervised,zou2019confidence} use model predictions as supervision, but iterative updates can amplify early errors through confirmation bias. This risk is acute in single-image binary segmentation, where pixel imbalance lets one predicted class dominate the update. CBA prevents this dominance through classwise selection and loss normalization, while DPA refreshes the resulting anchors as the prediction evolves.

%-------------------------------------------------------------------------
\section{Method}
%-------------------------------------------------------------------------

\subsection{Problem Definition}

We consider episodic test-time adaptation for binary segmentation, where a
model receives one unlabeled test image $\mathbf{x}$ and is reset before the
next image--task instance. Class-specific prompt templates are encoded,
averaged, and L2-normalized to obtain the background and foreground text
prototypes $\mathbf{t}_0$ and $\mathbf{t}_1$. Stacking them gives
$\mathbf{T}=[\mathbf{t}_0,\mathbf{t}_1]\in\mathbb{R}^{2\times d}$. The frozen
dense vision--language model produces
\begin{equation}
\mathbf{L}
=
\frac{f_v(\mathbf{x})\mathbf{T}^{\top}}{\tau},
\label{eq:zero_shot_logits}
\end{equation}
where $f_v(\mathbf{x})\in\mathbb{R}^{HW\times d}$ is the dense visual feature
map and $\tau=\exp(-\ell_{\mathrm{scale}})>0$ is the inverse of the frozen
checkpoint logit scale. The logits define per-pixel probabilities
$\mathbf{P}_{hw}=\operatorname{softmax}(\mathbf{L}_{hw})$, hard predictions
$\hat{Y}_{hw}=\arg\max_c P_{hw,c}$, and confidence scores
$C_{hw}=\max_c P_{hw,c}$. No evaluation labels or source samples are available
during adaptation. The objective is to improve the binary mask by updating a
restricted set of parameters while preserving the pretrained dense visual
representation.

\subsection{Overall Framework}

Figure~\ref{fig:method} summarizes BAPA as two complementary modules. CBA prevents the larger predicted region from dominating the loss by selecting confident foreground and background anchors separately and equalizing their total contributions. DPA reconstructs those anchors after each update and changes only the text prompts, leaving the dense visual features fixed.

\subsection{Analysis of Entropy-Induced Collapse}

\noindent\textbf{Entropy geometry.} Consider a binary segmentation model producing per-pixel logits $z_{hw}$ and foreground probability $p_{hw}=\sigma(z_{hw})$ via the sigmoid function. The standard TTA objective minimizes per-pixel entropy:

\begin{equation}
H(p) = -p\log p - (1-p)\log(1-p)
\label{eq:entropy}
\end{equation}

$H(p)$ has a unique local maximum at $p=0.5$ ($H(0.5)=\log 2$) and two global minima at $p=0$ and $p=1$ ($H(0)=H(1)=0$). The gradient with respect to the logit is:

\begin{equation}
\frac{dH}{dz} = \sigma(z)(1-\sigma(z))\log\frac{1-\sigma(z)}{\sigma(z)}
\label{eq:ent_grad}
\end{equation}

For $|z|\to\infty$, $\sigma(z)(1-\sigma(z))\sim e^{-|z|}$ and $|\log\frac{1-\sigma(z)}{\sigma(z)}|\sim |z|$. Hence, $|dH/dz|=O(|z|e^{-|z|})$. Entropy minimization supplies progressively less signal as a prediction becomes confident, including when it is incorrect. Once an erroneous prediction becomes highly confident, a finite number of adaptation steps may provide too little gradient to correct it.
%  This is an optimization argument, not a claim that finite logits are mathematically irreversible.

\noindent\textbf{Majority-dominated updates.} Let $\theta$ denote the adapted parameters and $z_i(\theta)$ the logit at pixel $i$. For the mean-entropy objective $\mathcal{L}_{\mathrm{ent}}=n^{-1}\sum_i H(\sigma(z_i))$, the parameter gradient is $\nabla_\theta \mathcal{L}_{\mathrm{ent}}=n^{-1}\sum_i (dH/dz_i)\nabla_\theta z_i$. This sum explains why pixel counts matter. If most pixels are currently predicted as background, their entropy gradients can dominate the update, especially when many pixels respond similarly to the adapted parameters. The resulting update can make background predictions more confident and move uncertain foreground pixels toward background. Class imbalance does not by itself guarantee collapse, because pixel-wise sensitivities also matter, but it provides a concrete and measurable source of majority-class bias.

\begin{figure}[t]
\centering
\includegraphics[width=0.8\columnwidth]{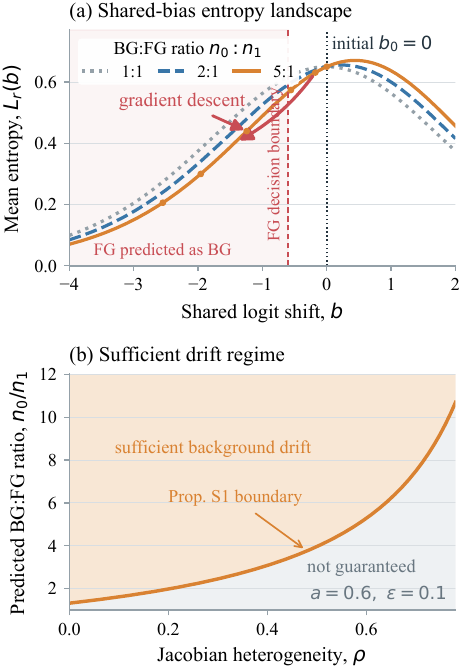}
\caption{Majority-dominated entropy updates and local drift condition.
(a) Increasing $n_0/n_1$ tilts the entropy landscape toward negative shared shifts; $b=-a$ is the foreground decision boundary. (b) For $a=0.6$ and $\epsilon=0.1$, the shaded upper region satisfies Supplementary Proposition~S1.}
\label{fig:drift-region}
\end{figure}

We first study this effect in the simplest setting where the adapted update moves all pixel logits in a common direction. Suppose $n_0$ pixels are initially predicted as background with logit $-a$, and $n_1$ pixels are initially predicted as foreground with logit $+a$, where $a>0$ measures the initial confidence margin and $n=n_0+n_1$. A scalar parameter $b$, initialized at $b_0=0$, shifts every logit by the same amount: $z_i(b)\in\{-a+b,\,a+b\}$. This shared shift is a simplified model of a prompt or normalization update that moves the foreground--background decision boundary. For the background-to-foreground count ratio $r=n_0/n_1$, Figure~\ref{fig:drift-region}(a) visualizes
\begin{equation}
L_r(b)=\frac{rH(\sigma(-a+b))+H(\sigma(a+b))}{r+1}.
\end{equation}
Define $f(x)=x\sigma(x)(1-\sigma(x))$ for $x\geq0$, and let $x_\star\approx1.54$ be the positive solution of $x\tanh(x/2)=1$, so that $f$ is increasing on $[0,x_\star]$.

\begin{theorem}[Majority-induced collapse under a shared update]
\label{thm:collapse}
Assume $n_0>n_1$, $b_0=0$, and $0<a\leq x_\star/2$. For the mean entropy
$L(b)=n^{-1}\sum_{i=1}^n H(\sigma(z_i(b)))$, gradient descent
$b_{t+1}=b_t-\eta L'(b_t)$ with any $\eta>0$ satisfies: (i) $b_t$ decreases; (ii) it reaches $b_t<-a$ in finitely many steps; and (iii) $b_t\to-\infty$, so every foreground probability converges to zero. If $n_0=n_1$, then $L'(0)=0$; if $n_1>n_0$, the symmetric all-foreground result holds.
\end{theorem}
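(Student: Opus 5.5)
The plan is to reduce everything to the sign and size of $L'(b)$ on the half-line $b\le 0$. First I will simplify the entropy derivative in \eqref{eq:ent_grad}. Since $\log\frac{1-\sigma(z)}{\sigma(z)}=-z$, we get
\[
g(z):=\frac{dH}{dz}=-z\,\sigma(z)(1-\sigma(z)).
\]
This $g$ is odd, with $g(-x)=f(x)$ for $x\ge 0$. Also $g(z)>0$ for $z<0$, $g(0)=0$, and $g(z)<0$ for $z>0$. Therefore
\[
L'(b)=\frac{1}{n}\bigl[n_0\,g(b-a)+n_1\,g(b+a)\bigr].
\]
At $b=0$ this gives $L'(0)=\frac{1}{n}(n_0-n_1)f(a)$. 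This already settles the tie case $n_0=n_1$, where $L'(0)=0$. For $n_1>n_0$ I will use the symmetry $b\mapsto -b$, which swaps the roles of the two classes, so that case reduces to the one treated below.

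Next, with $n_0>n_1$, I will split $b\le 0$ into two regions.

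On $[-a,0]$ we have $g(b-a)=f(a-b)$ and $g(b+a)=-f(a+b)$. Both arguments lie in $[0,2a]\subseteq[0,x_\star]$, and $a-b\ge a+b$. Since $f$ is increasing on $[0,x_\star]$, this gives $f(a-b)\ge f(a+b)$. Hence
\[
nL'(b)\;\ge\;(n_0-n_1)\,f(a-b)\;\ge\;(n_0-n_1)\,f(a)=:nc>0 .
\]
So $L'$ has a uniform positive lower bound $c$ on $[-a,0]$. This is the key place where the hypothesis $a\le x_\star/2$ is used, and I expect it to be the only delicate step: the monotonicity comparison must cover the whole interval, including the endpoint $b=-a$ where $f(a+b)=0$.

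On $b<-a$, both $b-a$ and $b+a$ are negative. Then both $g$-terms are strictly positive, so $L'(b)>0$.

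Now the iteration itself. Since $L'>0$ on all of $(-\infty,0]$ and $b_0=0$, induction shows $b_{t+1}<b_t\le 0$ for any $\eta>0$. This proves (i); a large step that overshoots far to the left is harmless, because it still lands in $b\le 0$. For (ii), every step taken while $b_t\in[-a,0]$ decreases $b$ by at least $\eta c$. Hence $b_t<-a$ after at most $\lceil a/(\eta c)\rceil+1$ steps.

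For (iii), $(b_t)$ is monotone decreasing. Suppose it had a finite limit $b^\ast$. Then $b^\ast<-a$, and $b_t-b_{t+1}=\eta L'(b_t)\to 0$. By continuity this forces $L'(b^\ast)=0$, which contradicts $L'>0$ on $(-\infty,-a)$. Hence $b_t\to-\infty$. Consequently both foreground probabilities, $\sigma(-a+b_t)$ and $\sigma(a+b_t)$, tend to $0$, so every pixel collapses to background.
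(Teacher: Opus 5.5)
Your proof is correct and follows the same route as the paper's sketch. You write $nL'(b)=n_0f(a-b)-n_1f(a+b)$ on $[-a,0]$, use the monotonicity of $f$ on $[0,x_\star]$ together with $n_0>n_1$ to get $L'>0$ there, and note that $L'>0$ for $b<-a$; your uniform bound $c=(n_0-n_1)f(a)/n$ for part (ii) and the monotone-limit contradiction for part (iii) fill in the details the sketch leaves implicit.
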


\textit{Proof sketch.} For $b\in[-a,0]$, $L'(b)=[n_0 f(a-b)-n_1 f(a+b)]/n>0$ by monotonicity of $f$ and $n_0>n_1$. The derivative remains positive for $b<-a$, so $b_t\to-\infty$. The complete proof is provided in Supplementary Theorem~S1. The theorem captures a sufficient mechanism: class imbalance destabilizes the symmetric stationary point, and a shared parameter direction propagates the majority decision to every pixel.

The common-shift assumption can be relaxed locally. For a unit parameter direction $v$, let $j_i=\langle\nabla_\theta z_i(\theta_0),v\rangle$ denote the projected logit Jacobian of pixel $i$. Let $\rho$ measure the heterogeneity of these projected responses, with smaller $\rho$ indicating stronger alignment, and let $\epsilon$ bound the local logit deviation from $\pm a$. Supplementary Proposition~S1 shows that, when the projected Jacobians are sufficiently aligned, a large enough predicted background-to-foreground ratio $n_0/n_1$ makes the entropy-gradient step drift toward background. This is a sufficient local condition, but it gives a testable prediction: real text-residual entropy updates should contain a common foreground--background drift component, which we measure below.

Figure~\ref{fig:drift-region} visualizes the shared-shift result and its local extension. Panel (a) shows that a background majority tilts the mean-entropy landscape toward negative shared shifts, whereas balanced predictions keep the landscape symmetric around $b=0$. Panel (b) gives the sufficient local drift condition: higher Jacobian heterogeneity $\rho$ requires stronger class imbalance to certify drift. The shaded region is sufficient rather than necessary. Empirically, ISIC, VOC, and DUTS often exhibit background-majority zero-shot masks, while OxPet gives the symmetric foreground-majority case.

The observed results follow the same pattern. Entropy-based prompt tuning fails severely on ISIC, reaching 0.006 Dice, and LayerNorm entropy adaptation also reduces zero-shot Dice from 0.486 to 0.427. On less imbalanced VOC masks, the same adaptation improves zero-shot performance, while OxPet shows that class-balanced supervision is also useful under foreground-majority imbalance. Entropy-based stabilizers may change the trajectory, but they retain confidence seeking as the supervision signal and do not directly balance foreground and background contributions.

\noindent\textbf{Class-balanced correction.} The preceding analysis suggests replacing confidence-only entropy with an anchor-based objective whose foreground and background terms have equal total influence. For a pseudo-label anchor $y$ held fixed during one update, cross-entropy (CE) gives

\begin{equation}
\mathcal{L}_{\text{CE}}(p,y) = -y\log p - (1-y)\log(1-p),\quad
\frac{d\mathcal{L}_{\text{CE}}}{dz} = p - y
\label{eq:ce}
\end{equation}

Therefore, CE remains corrective when a confident prediction disagrees with its anchor. Class balancing then removes the pixel-count bias by averaging CE within each predicted class before combining the classes. In the shared-shift model, if the selected anchors are fixed locally and correct, the resulting objective is
\begin{equation}
L_{\mathrm{bal}}(b)=\tfrac12\operatorname{softplus}(-a+b)
+\tfrac12\operatorname{softplus}(-a-b)
\label{eq:balanced_ce}
\end{equation}
where $\operatorname{softplus}(x)=\log(1+e^x)$. The two terms are the background and foreground anchor losses, each weighted by $1/2$. The unique minimizer is $b^\star=0$, so the shared shift is no longer biased by the relative number of background and foreground pixels. Supplementary Theorem~S2 gives the convexity and gradient-descent convergence proof. This local result motivates CBA, while DPA reapplies the same class-balanced construction after each prediction update.

\subsection{Class-Balanced Anchor Construction}

At adaptation step $s$, CBA constructs anchors from the current prediction. Let $\hat{Y}_{hw}^{(s)}=\arg\max_c P_{hw,c}^{(s)}$ and $C_{hw}^{(s)}=\max_c P_{hw,c}^{(s)}$ denote the hard predicted class and confidence at pixel $(h,w)$. For each predicted class $c\in\{0,1\}$, define $\mathcal{I}_c^{(s)}=\{(h,w):\hat{Y}_{hw}^{(s)}=c\}$. CBA keeps only the top-$K$ most confident pixels within each predicted class. Specifically, if $\mathcal{I}_c^{(s)}$ is nonempty, $q_c^{(K,s)}$ is the confidence threshold that selects the top-$K$ fraction of pixels in $\mathcal{I}_c^{(s)}$; if no pixel is predicted as class $c$, then $\mathcal{A}_c^{(s)}=\emptyset$. The classwise anchor sets are

\begin{equation}
\begin{aligned}
\mathcal{A}_c^{(s)} &= \left\{(h,w)\in\mathcal{I}_c^{(s)} : C_{hw}^{(s)} \geq q_c^{(K,s)}\right\},\\
\mathcal{A}^{(s)} &= \mathcal{A}_0^{(s)}\cup\mathcal{A}_1^{(s)} .
\end{aligned}
\label{eq:anchors}
\end{equation}

The anchor cardinalities may differ because the predicted regions can have different sizes, and DPA updates the anchors after every adaptation step. Here, $\mathbf{P}_{hw}^{(s)}=[P_{hw,0}^{(s)},P_{hw,1}^{(s)}]$ denotes the current background--foreground probability vector at pixel $(h,w)$, obtained by applying softmax to the step-$s$ logits. Accordingly, $\mathcal{L}_{\mathrm{CE}}(\mathbf{P}_{hw}^{(s)},c)=-\log P_{hw,c}^{(s)}$. To remove the residual count imbalance, CBA averages the anchor loss within each class and then weights the two classwise means equally:

\begin{equation}
\mathcal{L}_{\mathrm{anchor}}^{(s)}
= \sum_{\substack{c\in\{0,1\}:\\|\mathcal{A}_c^{(s)}|>0}}\frac{1}{2|\mathcal{A}_c^{(s)}|}
\sum_{(h,w)\in\mathcal{A}_c^{(s)}}
\mathcal{L}_{\mathrm{CE}}(\mathbf{P}_{hw}^{(s)},c).
\label{eq:balanced_anchor_loss}
\end{equation}

Consequently, whenever both predicted classes are present, background and foreground each contribute total weight $1/2$, irrespective of $|\mathcal{A}_0^{(s)}|/|\mathcal{A}_1^{(s)}|$; an absent class contributes zero for that update. Classwise selection preserves confident evidence from both predicted classes, whereas Eq.~\ref{eq:balanced_anchor_loss} removes their pixel-count dependence in nondegenerate updates. We use $K=20\%$ throughout.

\subsection{Dynamic Prompt Adaptation}

A learnable residual $\boldsymbol{\delta}_c^{(s)}\in\mathbb{R}^{d}$ is added to the frozen text prototype $\mathbf{t}_c$ of each class $c\in\{0,1\}$ and initialized as $\boldsymbol{\delta}_c^{(0)}=\mathbf{0}$. The resulting prototype is then L2-normalized:

\begin{align}
\mathbf{t}_c^{(s)}
&= \frac{\mathbf{t}_c+\boldsymbol{\delta}_c^{(s)}}
{\|\mathbf{t}_c+\boldsymbol{\delta}_c^{(s)}\|_2},
\qquad c\in\{0,1\}, \label{eq:prompt}\\
\mathcal{L}^{(s)} &= \mathcal{L}_{\mathrm{anchor}}^{(s)} + \frac{\lambda}{2}\|\boldsymbol{\delta}^{(s)}\|_2^2 . \label{eq:loss}
\end{align}
Stacking $\mathbf{t}_0^{(s)}$ and $\mathbf{t}_1^{(s)}$ as rows gives $\mathbf{T}^{(s)}\in\mathbb{R}^{2\times d}$. Thus, Eq.~\ref{eq:prompt} normalizes each class prototype independently, rather than normalizing the two-class matrix jointly. Here, $\boldsymbol{\delta}^{(s)}$ stacks the two class residuals, and $\|\boldsymbol{\delta}^{(s)}\|_2^2=\sum_{c\in\{0,1\}}\sum_{j=1}^{d}(\delta_{c,j}^{(s)})^2$ measures their total squared magnitude. We set $\lambda=10^{-2}$ and implement this penalty through Adam weight decay, without adding it a second time to the optimized loss. The factor $1/2$ makes its conceptual gradient $\lambda\boldsymbol{\delta}^{(s)}$.

\noindent\textbf{Dynamic adaptation cycle.} At step $s$, $\boldsymbol{\delta}^{(s)}$ defines $\mathbf{T}^{(s)}$ and hence the current prediction $\mathbf{P}^{(s)}$. DPA recomputes $\hat{\mathbf{Y}}^{(s)}$ and $\mathbf{C}^{(s)}=[C_{hw}^{(s)}]_{hw}$ from this prediction, applies CBA to reconstruct $\mathcal{A}_c^{(s)}$, treats the resulting pseudo-labels and anchor indices as stop-gradient supervision, and updates only the prompt residual:
\begin{equation}
\boldsymbol{\delta}^{(s+1)}
\leftarrow \operatorname{Adam}\!\left(
\boldsymbol{\delta}^{(s)},
\nabla_{\boldsymbol{\delta}}\mathcal{L}^{(s)}
\right).
\label{eq:prompt_update}
\end{equation}
The updated residual changes the foreground--background similarity boundary while leaving the dense visual representation unchanged. A new prediction then updates the supervision for step $s+1$, yielding the cycle $\boldsymbol{\delta}^{(s)}\!\rightarrow\mathbf{P}^{(s)}\!\rightarrow\mathcal{A}^{(s)}\!\rightarrow\mathcal{L}^{(s)}\!\rightarrow\boldsymbol{\delta}^{(s+1)}$. Adam~\cite{kingma2014adam} performs $S=20$ updates with learning rate $10^{-3}$, $\beta=(0.9,0.999)$, and weight decay $10^{-2}$.

\subsection{Design rationale.} CBA removes region-size scaling from the supervision. DPA updates that supervision as the prediction evolves, but restricts optimization to the binary text boundary so that the pretrained dense representation remains unchanged.

%-------------------------------------------------------------------------
\section{Experiments}
%-------------------------------------------------------------------------

\subsection{Setup}

\noindent\textbf{Model and datasets.} All dataset-wide results use NACLIP ViT-L/14~\cite{hajimiri2025naclip} through the implementation distributed with MLMP~\cite{noori2025mlmp}, without offline fine-tuning. We evaluate 16,486 image--task instances from the ISIC 2017 training split~\cite{codella2018skin}, PASCAL VOC 2012~\cite{everingham2010pascal}, DUTS-TE~\cite{wang2017learning}, and Oxford-IIIT Pet~\cite{parkhi2012cats} at $224^2$ resolution. VOC uses all 20 foreground classes and reports both the class macro-average and the pooled instance-level mean.

\begin{table}[t]\centering
{\small
\setlength{\tabcolsep}{1.4pt}
\begin{tabular}{@{}lccccc@{}}
\toprule
\textbf{Method} & \textbf{ISIC} & \textbf{VOC} & \textbf{DUTS} & \textbf{OxPet} & \textbf{Mean} \\
\midrule
ZS~\citeyearpar{hajimiri2025naclip}      & 0.486 & 0.532 & 0.200 & 0.610 & 0.457 \\
PL~\citeyearpar{lee2013pseudo}           & 0.246 & 0.450 & 0.189 & 0.484 & 0.342 \\
TENT~\citeyearpar{wang2021tent}          & 0.427 & 0.542 & 0.184 & 0.627 & 0.445 \\
EATA~\citeyearpar{niu2022efficient}      & 0.466 & 0.536 & 0.194 & 0.614 & 0.453 \\
TPT~\citeyearpar{shu2022test}            & 0.006 & 0.253 & 0.087 & 0.700 & 0.261 \\
SAR~\citeyearpar{niu2023towards}         & 0.485 & 0.532 & 0.200 & 0.610 & 0.457 \\
WATT~\citeyearpar{osowiechi2024watt}     & 0.295 & 0.332 & \textbf{0.340} & 0.529 & 0.374 \\
CLIPArTT~\citeyearpar{hakim2024clipartt} & 0.287 & 0.337 & 0.283 & 0.516 & 0.356 \\
MLMP~\citeyearpar{noori2025mlmp}         & 0.489 & 0.493 & 0.241 & 0.723 & 0.486 \\
O-TPT~\citeyearpar{sharifdeen2025otpt}   & 0.006 & 0.304 & 0.087 & 0.708 & 0.276 \\
BAPA (Ours)                       & \textbf{0.587} & \textbf{0.578} & 0.296 & \textbf{0.783} & \textbf{0.561} \\
\bottomrule
\end{tabular}}
\caption{Dataset-wide Dice $\uparrow$. Parenthesized years cite each method; zero-shot inference (ZS) and the pseudo-label baseline (PL) precede the TTA methods. VOC pools all 2,077 eligible image--class instances; Supplementary Table~S5 reports its macro-average. Mean averages the four unrounded dataset scores.}
\label{tab:main}
\end{table}

\noindent\textbf{Protocol and baselines.} All methods use the same evaluation manifests, reset for each image--task instance, and use no evaluation labels for model selection. We compare ZS, PL~\cite{lee2013pseudo}, TENT~\cite{wang2021tent}, MLMP~\cite{noori2025mlmp}, TPT~\cite{shu2022test}, O-TPT~\cite{sharifdeen2025otpt}, SAR~\cite{niu2023towards}, EATA~\cite{niu2022efficient}, CLIPArTT~\cite{hakim2024clipartt}, WATT~\cite{osowiechi2024watt}, and BAPA under a unified dense single-image protocol. MLMP, CLIPArTT, and WATT use the dense open-vocabulary semantic segmentation (OVSS) adaptation classes distributed with the MLMP implementation. TENT and SAR call the official update routines through a dense-logit wrapper; EATA, TPT, and O-TPT use dense analogues of their official objectives because their original entry points assume image-level samples. BAPA uses one setting across datasets: $K=20\%$, $S=20$, learning rate $10^{-3}$, and weight decay $10^{-2}$. The Supplementary section \emph{Experimental Protocol and Reproducibility} specifies the manifests, implementation, and baseline ports.

\subsection{Main Results}

\begin{figure}[t]
\centering
\includegraphics[width=\columnwidth]{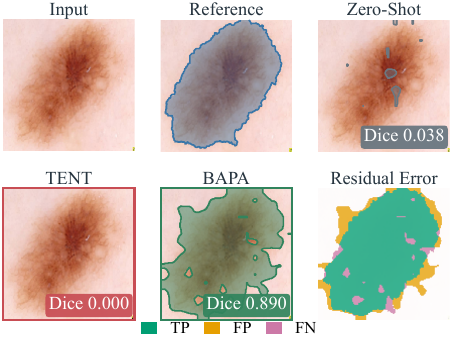}
\caption{Compact qualitative result. Panels show the input, reference mask, zero-shot prediction, TENT, BAPA, and BAPA residual errors for one representative ISIC lesion. Green, orange, and purple denote true positives, false positives, and false negatives in the residual-error panel.}
\label{fig:qual}
\end{figure}
\noindent\textbf{Dataset-wide comparison.}
Table~\ref{tab:main} uses every eligible VOC class for every method and reports one main Dice score per dataset. BAPA improves VOC pooled Dice from 0.532 to 0.578 and obtains the highest mean Dice across the four datasets. Across ISIC, VOC, and OxPet, BAPA also achieves the best individual dataset score; WATT is strongest on DUTS. Supplementary Tables~S3--S5 report the Oxford-Pet per-breed breakdowns and the complete VOC macro and per-class results; they show that BAPA improves every Oxford-Pet breed, raises VOC macro Dice from 0.517 to 0.575, and improves 14 of the 20 VOC classes.

\noindent\textbf{Qualitative results.}
Figure~\ref{fig:qual} complements the dataset-wide statistics with a single enlarged ISIC case study. TENT removes the lesion prediction, whereas BAPA recovers a localized foreground structure with substantially higher Dice. Supplementary Figure~S1 provides the full qualitative panel with additional ISIC and VOC examples.

\noindent\textbf{Paired inference.} Each BAPA result is paired with predictions from the comparison method on the same image--task instance. Relative to ZS, the 95\% paired-bootstrap intervals are above zero on every dataset. We also compare BAPA with the strongest non-BAPA method in each Table~\ref{tab:main} dataset column. For $\Delta=\text{Dice}_{\mathrm{BAPA}}-\text{Dice}_{\mathrm{baseline}}$, the intervals are ISIC versus MLMP $[0.088,0.109]$, VOC versus TENT $[0.028,0.043]$, DUTS versus WATT $[-0.052,-0.037]$, and OxPet versus MLMP $[0.058,0.062]$. All four differences remain significant after Holm correction over all 40 BAPA--comparator tests ($p_{\mathrm{Holm}}=0.004$). Thus, BAPA's highest four-dataset mean reflects significant gains on three datasets together with a significant deficit to WATT on DUTS, rather than uniform dominance. Resampling uses source images as independent units and clusters all VOC class tasks from the same image; the complete protocol and ZS comparisons are reported in the Supplementary Material.

\noindent\textbf{Mechanism diagnostic.} We decompose the entropy signal by defining $g_c$ as the gradient of mean entropy over pixels currently predicted as class $c$, with $c=0$ for background and $c=1$ for foreground. Among zero-shot masks containing both classes, the class-count-weighted log-ratio $\log_{10}(n_0\|g_0\|_2/(n_1\|g_1\|_2))$ averages 1.24 on ISIC, equivalent to a $17.21\times$ geometric background/foreground ratio; the ratios are 3.71 on VOC, 2.71 on DUTS, and 1.42 on OxPet. Its descriptive Pearson correlation with the binary zero-shot collapsed-mask indicator, defined using the 1\% foreground-fraction criterion in the Supplementary Material, is $r=0.72$, $0.45$, $0.28$, and $-0.29$, respectively. The negative OxPet value reflects the direction of the background-over-foreground ratio in a foreground-majority regime. The two gradients are strongly opposed on all datasets (mean cosine $-0.87$ to $-0.94$). All-BG and all-FG masks are excluded only where the ratio is undefined and remain in collapse-rate calculations.

This imbalance could still arise from unrelated pixelwise effects rather than a shared update direction. We therefore take one entropy-gradient step in the text-residual space $\boldsymbol{\delta}\in\mathbb{R}^{2\times d}$ and measure the resulting foreground-minus-background logit change $\Delta z_{hw}$ at each pixel. A negative $\Delta z_{hw}$ means that the pixel is pushed toward background. If the local theory captures the real update, many pixels should move together, so most of the $\Delta z$ energy should lie in a constant-pixel component. Table~\ref{tab:prompt_drift_main} shows exactly this pattern: the per-instance common-mode energy fraction averages 88.7--99.2\% across datasets, with the strongest drift on ISIC. Thus, the entropy gradient in the actual prompt space contains the shared foreground--background drift predicted by the local theory.

\begin{table}[t]\centering
{\small
\setlength{\tabcolsep}{4pt}
\begin{tabular}{@{}lccc@{}}
\toprule
\textbf{Dataset} & \textbf{$E_{\mathrm{com}}$} & \textbf{$\bar{\Delta z}$} & \textbf{FG$\to$BG} \\
\midrule
ISIC  & 99.2\% & $-0.121$ & 99.6\% \\
VOC   & 93.7\% & $-0.077$ & 82.7\% \\
DUTS  & 95.2\% & $-0.041$ & 66.4\% \\
OxPet & 88.7\% & $-0.032$ & 65.4\% \\
\bottomrule
\end{tabular}}
\caption{Prompt-space drift diagnostic. Entries are dataset means of per-instance quantities. $E_{\mathrm{com}}$ is the fraction of logit-change energy in the constant-pixel direction; $\bar{\Delta z}$ is the mean per-pixel logit change. These two metrics use every instance in each complete manifest. FG$\to$BG is averaged over instances with a nonempty foreground prediction.}
\label{tab:prompt_drift_main}
\end{table}

This diagnostic complements the correlation analysis by evaluating the entropy-induced direction in the actual prompt parameterization. The high $E_{\mathrm{com}}$ values show that the step contains a coherent foreground--background logit shift rather than only unrelated pixelwise changes. Its negative mean on all four datasets indicates a background-directed shared component, largest on ISIC, where the gradient-count imbalance is also strongest. These observations connect the sufficient local theory to real text-residual updates.

\subsection{Factorized Ablation}

\begin{table}[t]\centering
{\small
\setlength{\tabcolsep}{1.7pt}
\begin{tabular}{@{}cccccccc@{}}
\toprule
\textbf{Bal.} & \textbf{Policy} & \textbf{Params} & \textbf{ISIC} & \textbf{VOC} & \textbf{DUTS} & \textbf{OxPet} & \textbf{Mean} \\
\midrule
\checkmark & F & P  & 0.586 & \textbf{0.589} & 0.246 & 0.780 & 0.550 \\
--         & F & P  & 0.214 & 0.446 & 0.172 & 0.477 & 0.328 \\
\checkmark & D & P  & \textbf{0.587} & 0.578 & \textbf{0.296} & \textbf{0.783} & \textbf{0.561} \\
--         & D & P  & 0.003 & 0.247 & 0.084 & 0.476 & 0.203 \\
\checkmark & F & LN & 0.559 & 0.564 & 0.231 & 0.668 & 0.505 \\
--         & F & LN & 0.395 & 0.526 & 0.169 & 0.583 & 0.418 \\
\checkmark & D & LN & 0.563 & 0.565 & 0.234 & 0.669 & 0.508 \\
--         & D & LN & 0.392 & 0.525 & 0.164 & 0.583 & 0.416 \\
\bottomrule
\end{tabular}}
\caption{Dataset-wide $2\times2\times2$ factorized ablation (Dice $\uparrow$). Bal.=CBA; F=fixed zero-shot labels and anchor locations; D=dynamic anchor updates; P=prompt residuals; LN=LayerNorm. BAPA corresponds to the balanced D+P row. VOC pools all 2,077 eligible instances; Mean averages the four unrounded dataset scores.}
\label{tab:ablation}
\end{table}

Table~\ref{tab:ablation} factorizes CBA from the two choices that define DPA: dynamic anchor updates and prompt-residual optimization. With both DPA choices fixed, adding CBA changes Dice from 0.003 to 0.587 on ISIC, 0.247 to 0.578 on VOC, 0.084 to 0.296 on DUTS, and 0.476 to 0.783 on OxPet. Within the CBA rows, enabling dynamic refresh instead of fixed anchors changes Dice by +0.001, -0.011, +0.050, and +0.003. Prompt residuals also outperform the matched LayerNorm updates in every class-balanced setting. These comparisons separately support the balancing role of CBA and the update design of DPA.

\subsection{Robustness and Scope}

The complete BAPA configuration, combining CBA with DPA, has the highest Dice on DUTS and OxPet and is within 0.002 of the best ablation row on ISIC and within 0.011 on VOC. Sensitivity analysis shows small Dice variation across weight decay and 10--30 update steps, while the aggressive learning rate $5\times10^{-3}$ gives the lowest Dice in every dataset column.

\noindent\textbf{Scope and limitations.} The theory establishes collapse and local drift along aligned update directions, and the prompt-space diagnostic measures this component in real text-residual updates. Supplementary Theorem~S2 covers anchor-stable regions; the coupled dynamic regime is evaluated empirically by the factorized ablation. The evaluation uses one backbone family and segmentation-adapted versions of several baselines; additional architectures and native multi-class segmentation remain future work.

\noindent\textbf{Implications.} The central finding is that entropy minimization can be structurally misaligned with imbalanced binary segmentation. CBA provides the main stabilization by preventing either predicted class from dominating the anchor loss. On this balanced objective, DPA can refresh supervision from improved intermediate predictions while restricting the update to text prompts and preserving dense visual features. BAPA combines these two roles in a single adaptation procedure.

%-------------------------------------------------------------------------
\section{Conclusion}
%-------------------------------------------------------------------------

Entropy minimization can amplify the majority prediction in imbalanced binary segmentation and drive a mask toward a degenerate state. The local analysis and prompt-space diagnostic connect this failure to a shared foreground--background drift. BAPA addresses the two resulting requirements through CBA, which equalizes foreground and background supervision, and DPA, which refreshes that supervision while adapting only text prompts over frozen encoders. Across four datasets, BAPA improves zero-shot Dice in every case, attains the highest mean Dice, and leads on three datasets. These results show that dense test-time adaptation benefits from combining class-balanced supervision with restricted dynamic updates.

\small
\bibliography{references}

\end{document}